\newcolumntype{L}[1]{>{\raggedright\let\newline\\\arraybackslash\hspace{0pt}}m{#1}}
\newcolumntype{C}[1]{>{\centering\let\newline\\\arraybackslash\hspace{0pt}}m{#1}}
\newcolumntype{R}[1]{>{\raggedleft\let\newline\\\arraybackslash\hspace{0pt}}m{#1}}
\newcommand{\cmark}{\ding{51}}%
\newcommand{\xmark}{\ding{55}}%
\theoremstyle{Proposition}
\newtheorem{theorem}{Theorem}[section]
\newtheorem{corollary}[theorem]{Corollary}
\newtheorem{lemma}[theorem]{Lemma}
\newtheorem{proposition}[theorem]{Proposition}
\theoremstyle{definition}
\theoremstyle{remark}
\newcommand{\bq}{\begin{equation}}
\newcommand{\eq}{\end{equation}}
\newcommand{\R}{\mathbb{R}}
\newcommand{\trp}{\mathsf{T}}
\title{Farkas layers: don't shift the data, fix the geometry}
\author{Aram-Alexandre Pooladian$^*$, Chris Finlay$^*$ \& Adam M. Oberman\\
Department of Mathematics and Statistics\\
McGill University\\
Montreal, QC, Canada \\
}
\begin{document}

\maketitle

\begin{abstract}
Successfully training deep neural networks often requires either {batch normalization}, appropriate {weight initialization}, both of which come with their own challenges. We propose an alternative, geometrically motivated method for training. Using elementary results from linear programming, we introduce Farkas layers: a method that ensures at least one neuron is active at a given layer. Focusing on residual networks with ReLU activation, we empirically demonstrate a significant improvement in training capacity in the absence of batch normalization or methods of initialization across a broad range of network sizes on benchmark datasets.
\end{abstract}

\section{Introduction}
The training process of deep neural networks has gone through significant shifts in recent years, primarily due to revolutionary network architectures, such as Residual neural networks \citep{resnet50} and \textit{normalization} techniques, initially proposed as batch normalization \citep{batchnorm}. The former is the backbone for current ``state-of-the-art" results for image-based tasks such as classification. Normalization can be found in a variety of flavors and applications; \citep{layernorm,instancenorm,groupnorm,vaswani,zhu}. For reasons that are not fully understood, normalization gives rise to many desirable traits in network training, e.g. a fast convergence rate, but also comes with a cost by increasing vulnerability to adversarial attacks \citep{bn_adversarial}. Apart from normalization, network weight \textit{initialization} has been a driving force of improved performance. Throughout the years, various initialization schemes have been proposed, in particular Xavier initialization \citep{xavier_init}, FixUp \citep{fixup}, and a recent asymmetric initialization \citep{lu2019dying}. These approaches are often connected to probabilistic notions or balancing learning rates while training. \\  \\
Modern network architectures catered for image-classification tasks incorporate various one-sided activation functions, the canonical example being the Rectified Linear Unit (ReLU) \citep{relu}. Other one-sided activations include the Exponential Linear Unit (ELU) \citep{elu}, Scaled ELU (SELU) \citep{selu}, and LeakyReLU \citep{leaky}. ELU and SELU were constructed with the purpose of eliminating batch normalization (BN); these activation functions minimize the internal covariate shift (ICS), which is what BN was intended to accomplish. Recent developments have shown that BN has little impact on ICS but affects the smoothness of the loss landscape, allowing for easier (and faster) training regimes \citep{madry_bn}. \\ \\ 
A geometric interpretation of batch normalization can be readily seen in the context of the \textit{dying ReLU problem}. This phenomenon occurs when the gradient of the neuron is zero (when the neuron outputs only negative values), in which case the neuron is ``dead". This effectively freezes the neuron  during  training. If too many neurons are dead, the network learns slowly. In fact, it is entirely possible for a network to be ``born dead", where it does not allow learning at all \citep{lu2019dying}. To motivate this, consider a simple binary classification problem: suppose two sets of points in $\R^2$ are sufficiently separated and we wish to classify them using a simple 1-layer ReLU network (with standard Cross Entropy loss). If the network is initialized such that one cloud of points is not ``observed", as in Figure \ref{fig:badplanes}, the network will not learn to classify those points. Batch normalization will shift the points to behave like \ref{fig:bnplanes}; the network is no longer dead. There is a far simpler geometric solution in $\R^2$: given one hyperplane, construct another to face the missing data, as shown in Figure \ref{fig:goodplanes} by the green hyperplane, which is now a non-dead component of the network. \\

\begin{figure}[H]
    \centering
    \begin{subfigure}[b]{0.25\textwidth}
        \includegraphics[width=\textwidth]{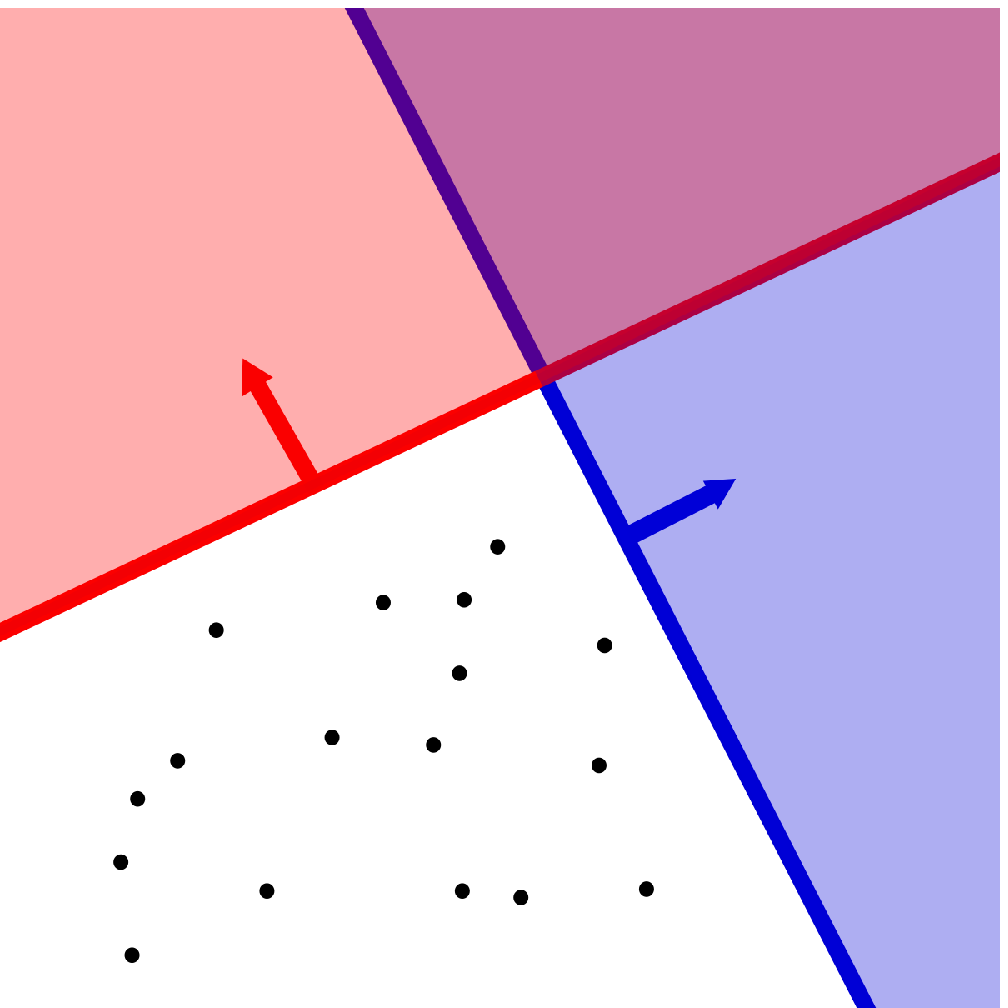}
        \caption{Example of dead layer}
        \label{fig:badplanes}
    \end{subfigure}
    \hspace{2em} 
    \begin{subfigure}[b]{0.25\textwidth}
        \includegraphics[width=\textwidth]{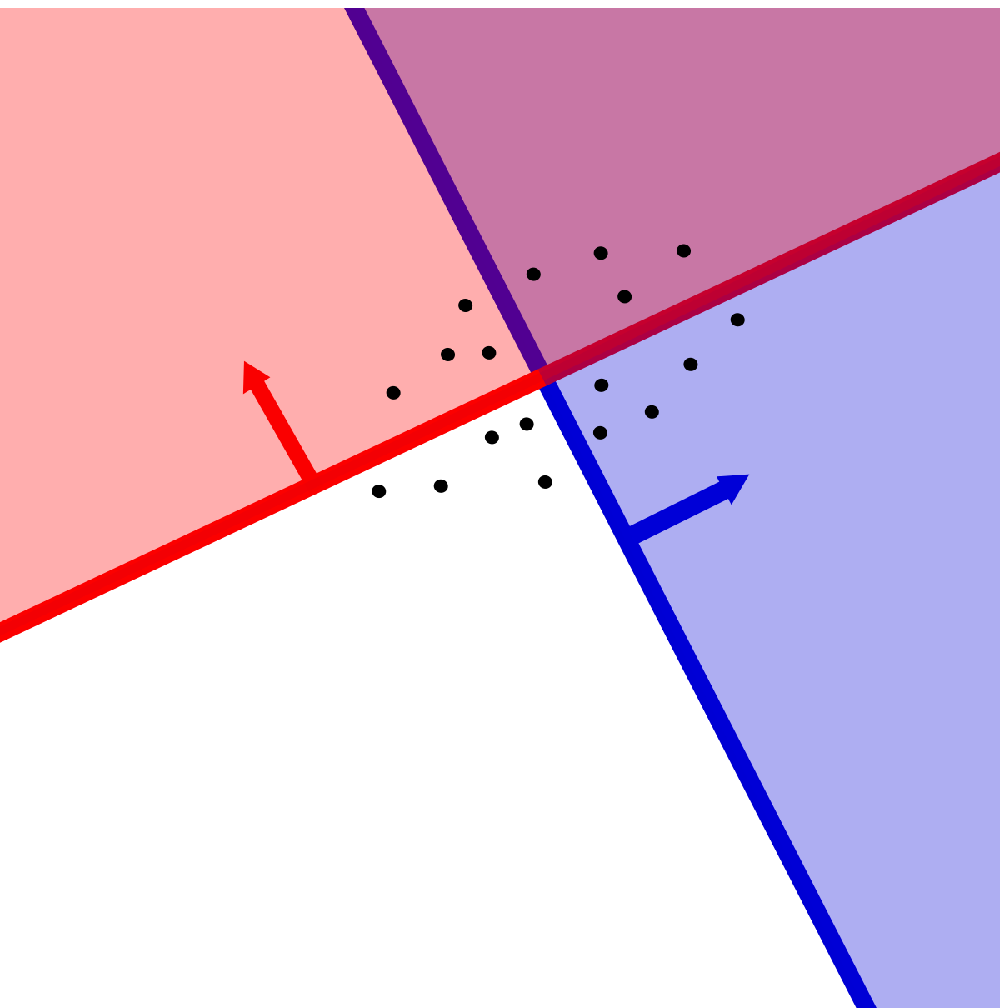}
        \caption{Effect of normalization}
        \label{fig:bnplanes}
    \end{subfigure}
    \hspace{2em} 
    \begin{subfigure}[b]{0.25\textwidth}
        \includegraphics[width=\textwidth]{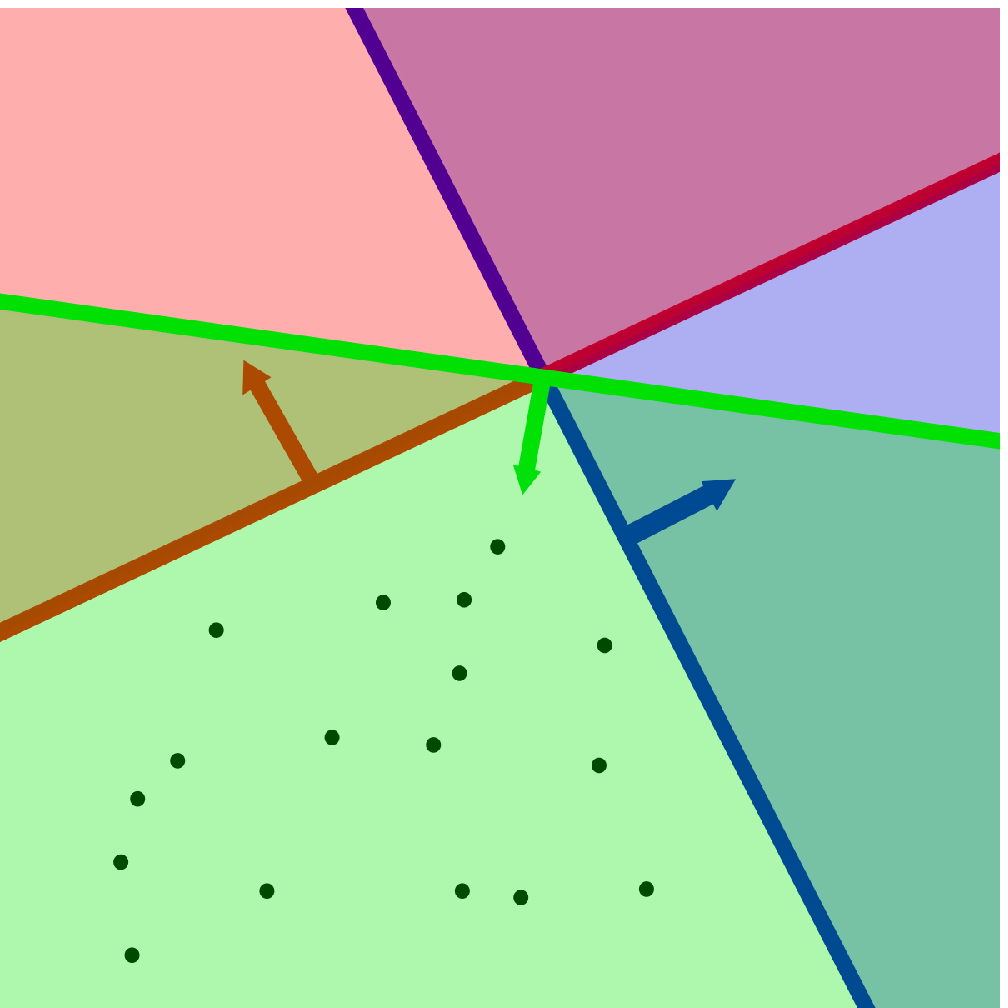}
        \caption{Effect of Farkas layers}
        \label{fig:goodplanes}
    \end{subfigure}
    \caption{A motivating geometric interpretation}\label{fig: plane-interpretation}
\end{figure}

\subsection*{Our contributions}
We present a geometrically motivated method for network training in the absence of initialization and normalization. Our novel layer structure, called Farkas layers, ensures that \textit{at least} one neuron is active in every layer. Using off-the-shelf networks, Farkas layers can recover a significant part of the explanatory power of DNNs when batch normalization is removed, sometimes up to 10\%, as demonstrated by empirical results on benchmark image classification tasks, such as CIFAR10 and CIFAR100. When used in conjunction with batch normalization on ImageNet-1k networks, we empirically show an approximate 20\% improvement on the first epoch over only using batch normalization. Finally, we claim that input data normalization is a critical component for using large learning rates in FixUp, which is not the case for Farkas layer-based networks. All in all, this work provides a new research direction for architecture design beyond initialization methods, normalization layers, or new activation functions.
   
\section{Background}
\subsection{Notation}
We define neural networks as iterative compositions of activation functions with linear functions.  More specifically, given an input $x^{(k)}\in \R^n$ to the $k$-th layer, the
output $x^{(k+1)}$ is typically written
\begin{align}
  x^{(k+1)} = \varphi(W^{(k)} x^{(k)} + b^{(k)}) 
  \label{eq_layer}
\end{align}
where $W^{(k)} \in \R^{m\times n}$ is a weight matrix, the rows are written as $w_i^\trp$ ($i = 1,\ldots,m$), $b^{(k)} \in \R^m$ is the layer's bias, and $\varphi$ is the activation function of the layer. For the remainder of our analysis, we consider the ReLU activation function, written $$ \varphi(x) = \max\{x,0\},$$which acts component-wise in the case of vector arguments. 
\subsection{Related work}
Previous work on  the dying ReLU problem, or vanishing gradient problem, in the case of using sigmoid-like activation functions, has heavily revolved around novel weight initializations. To address the vanishing gradient problem, \citep{xavier_init} proposed an initialization that is still often used with ReLU activations \citep{shang}. \citep{kaiming_init} proposed a scaled version of a normal distribution, which improved training for pure convolutional neural networks. More recently, \citep{lu2019dying} studied the dying ReLU problem from a probabilistic perspective and addressed networks that are ``born dead", i.e. not capable of learning. Their analysis is based on the reasonable assumption that all weights and biases are initialized with a non-zero probability of being dead:
\begin{align}\label{eq: lulu_prob}
    \mathbb{P}\left( w_i^\trp x + b_i < 0  \right) \geq p > 0 \quad (i = 1,\ldots,m),
\end{align}
where $m$ is the number of neurons in a given layer, and $p$ is a fixed positive constant. They show that, as the number of layers approaches infinity, the probability of having a network be born dead approaches one. They also find that initializing weights with a symmetric distribution is more likely to cause a network to be born dead and thus propose initializing weights using asymmetric distributions to mitigate this problem. Their analysis is relevant in the case that the network is not very wide, which is distinct from the case we study in this paper. \\ \\
Normalization has been one of the driving forces of recent state-of-the-art results; the general idea is to manipulate the neuron activation with various statistics, such as subtracting the mean and dividing by the variance. Popular normalization techniques include batch normalization \citep{batchnorm}, Layer normalization \cite{layernorm}, Instance normalization \citep{instancenorm}, and Group normalization \citep{groupnorm}. However, there is a desire to eliminate normalization from training. A recent work that does this is FixUp \citep{fixup}, a novel initialization technique that works by scaling the weights as a function of the network structure, allowing the network to take large learning rate steps during training. Reportedly, FixUp provides the same level of accuracy as that of a BN-enhanced network and scales to networks that train on ImageNet-1k. As stated in FixUp, the inherent structure of residual networks already prevents some level of gradient vanishing, since the variance of the layer output grows with depth. In the case of positively-homogeneous blocks (e.g. no bias in a Linear or Convolutional layer), this type of variance increase can lead to gradient explosion \citep{hanin2018start}, which makes training more difficult. 
\section{Farkas layers}
\subsection{Augmenting ReLU activated layers}
We present a novel approach to understanding how weight matrices and bias vectors contribute to neural network learning, called Farkas layers. As the name suggests, our method is influenced by Farkas' lemma. In particular, we use the following representation of Farkas' lemma; the proof is an exercise in linear programming and is left in the appendix. 
\begin{lemma}[Representation of Farkas' lemma] \label{lemma:farkas}
Let $W \in  \R^{m\times n}, b \in \R^m$ with $x \in \R^n$. Then the set $\left\{ x \mid Wx +b \leq 0  \right\}$ is empty if and only if there exists a $\lambda\in\R^m$ such that $\lambda_i \geq 0$ (for $i=1,\ldots,m)$, with $W^\trp \lambda = 0$ and   $\lambda^\trp b > 0$. 
\end{lemma}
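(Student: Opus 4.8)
The plan is to prove the two implications separately. The converse (existence of $\lambda$ $\Rightarrow$ emptiness) is an elementary ``certificate'' computation, while the forward direction (emptiness $\Rightarrow$ existence of $\lambda$) is the substantive half and is exactly a theorem of the alternative.

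For the converse ($\Leftarrow$) I would argue directly. Given $\lambda \ge 0$ with $W^\trp\lambda = 0$ and $\lambda^\trp b > 0$, suppose for contradiction that some $x$ satisfies $Wx + b \le 0$. Left-multiplying this inequality by $\lambda^\trp$ (legitimate since $\lambda \ge 0$, so the nonnegative aggregation of the rows preserves the inequality) yields $\lambda^\trp W x + \lambda^\trp b \le 0$, and since $\lambda^\trp W x = (W^\trp\lambda)^\trp x = 0$ we get $\lambda^\trp b \le 0$, contradicting $\lambda^\trp b > 0$. Hence $\{x : Wx + b \le 0\}$ is empty. This half also records the geometric content: $\lambda$ folds the rows of $W$ into the zero functional while keeping a strictly positive right-hand side, i.e.\ it is an infeasibility certificate.

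For the forward direction ($\Rightarrow$) I would first reformulate emptiness as a non-membership statement. Note that $Wx + b \le 0$ has a solution iff $-b = Wx + s$ for some $x \in \R^n$ and some $s \ge 0$; hence $\{x : Wx + b \le 0\} = \emptyset$ is equivalent to
\[
  -b \notin C, \qquad C := \{\, Wx + s \;:\; x \in \R^n,\ s \in \R^m,\ s \ge 0 \,\} = \range(W) + \R^m_{\ge 0}.
\]
The set $C$ is a finitely generated convex cone, hence closed and convex, so the separating hyperplane theorem provides $\lambda \in \R^m$ with $\lambda^\trp(-b) > \sup_{c \in C}\lambda^\trp c$. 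Since $C$ is a cone the supremum is $0$ or $+\infty$, and finiteness forces $\lambda^\trp c \le 0$ for every $c \in C$ together with $\lambda^\trp(-b) > 0$. Taking $c = Wx$ for arbitrary $x$ gives $(W^\trp\lambda)^\trp x \le 0$ for all $x$, hence $W^\trp\lambda = 0$; taking $c$ equal to each standard basis vector of $\R^m_{\ge 0}$ gives $\lambda \le 0$. Then $\mu := -\lambda$ satisfies $\mu \ge 0$, $W^\trp\mu = 0$, $\mu^\trp b > 0$, which after renaming is the claimed $\lambda$. Equivalently, one can phrase this via strong LP duality: write the feasibility question as $\min\{\mathbf{1}^\trp s : s \ge Wx + b,\ s \ge 0\}$, observe the optimal value is $0$ precisely when the system is solvable, and extract $\mu$ from an optimal dual solution when the value is positive.

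The main obstacle lies entirely in the forward direction, and it is the usual one for Farkas-type results: $\lambda$ cannot be produced by algebra from the hypothesis and requires a genuine convexity/separation (or LP-duality) theorem. The one point that needs care is that $C = \range(W) + \R^m_{\ge 0}$ is \emph{closed} — this is what licenses strict separation of the point $-b$ from $C$; closedness holds because $C$ is finitely generated (polyhedral), and would in general fail for arbitrary convex cones. Everything else — converting ``bounded above on a cone'' into the sign conditions on $\lambda$, and the final sign flip — is routine.
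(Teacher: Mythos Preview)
Your argument is correct. The converse direction is the standard certificate computation, and for the forward direction the reduction to $-b \notin C = \range(W) + \R^m_{\ge 0}$ followed by separation of a point from a closed polyhedral cone is sound; you correctly identify closedness of $C$ as the place where work is hidden, and the extraction of the sign conditions on $\lambda$ from ``bounded on a cone'' is handled cleanly.

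The paper takes a different, more \emph{reductionist} route. It quotes the classical equality form of Farkas' lemma (the alternative for $\{x : Wx + b = 0,\ x \ge 0\}$) as a black box, and then reduces the inequality system $Wx + b \le 0$ to that form by writing $x = x^+ - x^-$ with $x^\pm \ge 0$ and introducing a nonnegative slack, so that feasibility becomes $\bar W \bar x + b = 0$, $\bar x \ge 0$, with $\bar W = [\,W \mid -W \mid I\,]$; the dual condition on $\lambda$ for $\bar W$ then collapses to $W^\trp\lambda = 0$, $\lambda \ge 0$, $\lambda^\trp b > 0$. So the paper's proof is a short change-of-variables exercise once standard Farkas is granted, whereas yours is self-contained from the separating-hyperplane theorem and makes the geometric content (and the role of closedness) explicit. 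Both are valid; the paper's is quicker if one is willing to import standard Farkas, yours is more transparent about where the nontrivial convexity input actually enters.
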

We present the construction of $W$ and $b$ and prove using Lagrangian duality \citep{boyd} that such weights and bias vectors will satisfy Farkas' lemma, resulting in at least one positive component. In essence, this solves the dying ReLU problem while training.
\begin{theorem}\label{thm: main}
Let $L$ be a layer of a neural network with ReLU activation function, with weights $W\in\R^{m\times n}$ (with rows $w_i^\trp$) and bias vector $b \in \R^m$, and let $\lambda \in \R^m$, with $\lambda_i \geq 0$ (with at least one $\lambda_i > 0$, without loss of generality $\lambda_m > 0$). Further, suppose $W$ has the property that $$w_m = \dfrac{-1}{\lambda_m} \sum_{j=1}^{m-1} \lambda_j w_j$$and that $b_m > \dfrac{-1}{\lambda_m}\sum_{j=1}^{m-1} \lambda_j b_j$. Then this layer has a non-zero gradient.
\end{theorem}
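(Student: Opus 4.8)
The plan is to reduce the claim directly to the representation of Farkas' lemma in \autoref{lemma:farkas}. First I would put the two structural hypotheses into the symmetric form that lemma wants. Multiplying $w_m = -\frac{1}{\lambda_m}\sum_{j=1}^{m-1}\lambda_j w_j$ through by $\lambda_m$ and collecting terms gives $\sum_{j=1}^{m}\lambda_j w_j = 0$, i.e. $W^\trp \lambda = 0$; multiplying $b_m > -\frac{1}{\lambda_m}\sum_{j=1}^{m-1}\lambda_j b_j$ through by $\lambda_m > 0$ gives $\sum_{j=1}^{m}\lambda_j b_j > 0$, i.e. $\lambda^\trp b > 0$. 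Since also $\lambda \geq 0$ componentwise (and $\lambda_m > 0$ forces $\lambda \neq 0$), this $\lambda$ is exactly a Farkas certificate, so by \autoref{lemma:farkas} the polyhedron $\{x \mid Wx + b \leq 0\}$ is empty. Equivalently, one can rerun the short Lagrangian-duality argument directly: $\lambda$ is precisely a dual multiplier certifying infeasibility of the system $Wx + b \leq 0$.

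Next I would translate emptiness of that set into the gradient statement. For any input $x \in \R^n$ presented to the layer, $x$ does not lie in $\{x \mid Wx + b \leq 0\}$, so there is an index $i = i(x)$ with $w_i^\trp x + b_i > 0$; hence $\varphi(Wx + b)_i = w_i^\trp x + b_i > 0$, so neuron $i$ is active and $\varphi$ coincides locally with the identity in that coordinate. Therefore the Jacobian of the layer's activation map is never the zero matrix — at least one diagonal entry equals $1$ at every input — which is exactly the negation of the dying-ReLU / ``born dead'' condition, and in particular the derivative of the layer's output with respect to $(w_{i(x)}, b_{i(x)})$ does not vanish.

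The mathematical content is a one-line change of variables plus \autoref{lemma:farkas}, so I do not expect an analytic obstacle; the points that need care are bookkeeping. One is that the live coordinate $i(x)$ may depend on the input, so the guarantee is ``at every input, some neuron is active,'' not ``a fixed neuron is always active'' — but this is still all that is needed to keep the layer from freezing during training. The other is to state precisely what ``non-zero gradient'' means here: it is the claim that the activation Jacobian at this layer is nonzero (at least one live neuron), which removes the pathology that would otherwise freeze the neuron regardless of the loss; because of possible cancellations further downstream, I would be careful not to over-claim that the end-to-end parameter gradient is literally nonzero for every conceivable loss and every input.
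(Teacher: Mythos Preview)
Your proposal is correct. The only difference from the paper is presentational: you invoke \autoref{lemma:farkas} directly to conclude $\{x \mid Wx+b\le 0\}=\emptyset$, whereas the paper instead reformulates $\max_i (w_i^\trp x+b_i)>0$ as a linear program, computes its Lagrangian dual, and uses weak duality to get $p^*\ge \lambda^\trp b>0$ --- which is essentially a rederivation of the Farkas certificate you cite (and you even flag this equivalence yourself). Your route is a bit more economical since the lemma has already been stated; the paper's route makes the simplex normalization $\lambda^\trp\bm{1}=1$ explicit, but that is just a rescaling of your $\lambda$. Your caveats about the input-dependence of the active index $i(x)$ and the precise meaning of ``non-zero gradient'' are well taken and, if anything, sharper than the paper's own phrasing.
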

\begin{proof}
Let $W \in \R^{m\times n}$ (with rows $w_i^\trp$) and bias $b\in\R^m$, with the aforementioned assumptions, and arbitrary input $x$. The output of this layer is $\varphi(Wx + b)$, and has a non-zero gradient if and only if there exists at least one component such that 
\begin{align}\label{eq: max_problem}
    p^* := \max_i w_i^\trp x + b_i > 0.
\end{align}
The left-hand side of (\ref{eq: max_problem}) (i.e. omitting the strict inequality constraint) can be written as the following minimization problem:
\begin{align}
    p^* := \min_{x,s} s \quad \text{subject to} \quad w_i^\trp x + b_i \leq s \ \ (i = 1,\ldots, m),
\end{align}
or equivalently,
\begin{align}
    p^* = \min_{x,s} s \quad \text{subject to} \quad W x + b \leq s\bm{1},
\end{align}
where $\bm{1}$ is the vector of all ones. The Lagrangian for this problem is
\begin{align}
    \mathcal{L}(x,s,\lambda) = s + \lambda^\trp(Wx + b - s\bm{1})
\end{align}
where $\lambda \geq 0$. We compute the dual problem,
\begin{align}
    d(\lambda) &= \inf_{x,s} \mathcal{L}(x,s,\lambda) \\
    &= \inf_{x,s}  s(1 - \lambda^\trp \bm{1}) + \lambda^\trp b + (W^\trp \lambda)^\trp x \\
    &= \begin{cases}
    \lambda^\trp b &\text{if } \lambda^\trp \bm{1} = 1, W^\trp \lambda = 0, \ (\lambda \geq 0) \\
    -\infty &\text{otherwise}.
    \end{cases}
\end{align}
The conditions on $W$ are met by assumption, and there are many ways to ensure that $\lambda$ is on the simplex and thus has at least one strictly positive component. Both the primal and dual problems are linear optimization problems: the objective and constraints are linear. By weak duality, $p^* \geq \sup_\lambda d(\lambda)$. Thus to ensure that $p^* > 0$ (as in (\ref{eq: max_problem})), it suffices to show that $\lambda^\trp b > 0$. This is guaranteed by the assumptions on the bias vectors, and so we are done.
\end{proof} 
It is highly unlikely that a given layer in a deep neural network has all dead neurons. However, we believe that guaranteeing the explicit activity of one neuron is beneficial in that it improves training. Figure \ref{fig: plane-interpretation} motivates the use of Farkas layers as a potential substitute for (BN) in the context of learning: the arrows of the corresponding hyperplane indicate the ``non-zero" part of the ReLU. In the case of \ref{fig:badplanes}, we see that all data points end up on the ``zero" side, which would typically lead to no learning. Heuristically, the effect of BN mimics Figure \ref{fig:bnplanes}, where the data points are centered and scaled by a diagonal matrix (which geometrically acts like an ellipse) allowing for learning to occur. Farkas layers will instead append another hyperplane that will be guaranteed to see all the points (in a ``third" dimension). 
\subsection{Extension to general one-sided activations}
The use of Farkas layers can be extended to other one-sided activation functions, such as a smooth-ReLU (i.e. replacing the kink with a polynomial), ELU, and several others. In all these cases, one can impose a scalar cutoff value $c$ such that ``beyond" this cutoff, we have very small (or zero) gradients. This is shown in the following corollary, which has the same proof as Theorem \ref{thm: main}.
\begin{corollary}
Let $L$ be a layer of a neural network with an arbitrary one-sided activation function $\psi$ with cutoff $c \in \R$, i.e. for all $x < c$, we have $\psi(x) \simeq 0$ (or exactly zero). Denote the weights by $W\in\R^{m\times n}$ (with rows $w_i$) and bias vector $b \in \R^m$, and let $\lambda \in \R^m$, with $\lambda_i \geq 0$, with at least one positive component. To ensure that $ \psi(Wx + b) $ has at least one component greater than $c$, we require that \[w_m = \frac{-1}{\lambda_m} \sum_{j=1}^{m-1} \lambda_j w_j \]and that $b_m > c - \frac{1}{\lambda_m} \sum_{j=1}^{m-1} \lambda_j b_j$.
\end{corollary}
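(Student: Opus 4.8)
The plan is to run the proof of Theorem~\ref{thm: main} essentially verbatim, with the single modification that the scalar variable now tracks the threshold $c$ rather than $0$. Concretely, $\psi(Wx+b)$ has a component exceeding $c$ precisely when $\max_i\, (w_i^\trp x + b_i) > c$, so it suffices to show that
\[
  p^* := \min_{x,s}\, s \quad\text{subject to}\quad Wx + b \le s\bm{1}
\]
satisfies $p^* > c$: indeed $p^*$ is the worst case over inputs $x$ of $\max_i (w_i^\trp x + b_i)$, so $p^* > c$ forces every input to have at least one coordinate above the cutoff, hence at least one component of $\psi(Wx+b)$ strictly above $c$.

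Next I would form the same Lagrangian $\mathcal L(x,s,\lambda) = s + \lambda^\trp(Wx + b - s\bm{1})$ with $\lambda \ge 0$ and take the infimum over $(x,s)$. Exactly as in Theorem~\ref{thm: main}, the dual function is $d(\lambda) = \lambda^\trp b$ when $\lambda$ lies on the simplex and $W^\trp\lambda = 0$, and $-\infty$ otherwise. The hypothesis $w_m = -\lambda_m^{-1}\sum_{j<m}\lambda_j w_j$ is exactly the statement $W^\trp\lambda = 0$ for the given $\lambda$ (after normalizing it to the simplex, which only rescales by the positive constant $\bm{1}^\trp\lambda$), so this $\lambda$ is dual feasible and $\lambda_m > 0$ supplies the required strictly positive component.

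By weak duality, $p^* \ge d(\lambda) = \lambda^\trp b$, so it is enough to check $\lambda^\trp b > c$ — equivalently $\lambda^\trp(b - c\bm{1}) > 0$, which is precisely the Farkas condition of Lemma~\ref{lemma:farkas} applied to the shifted system $\{x \mid Wx + (b - c\bm{1}) \le 0\}$. Isolating $b_m$ in this inequality using $w_m$'s defining relation then produces the stated bound $b_m > c - \lambda_m^{-1}\sum_{j<m}\lambda_j b_j$.

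I expect the only delicate point to be this final bookkeeping step: one must be careful about whether $\lambda$ has been rescaled to the simplex when translating ``$\lambda^\trp b > c$'' into the coordinatewise inequality on $b_m$, since the $c$-shift interacts with the normalization constant $\bm{1}^\trp\lambda$ in a way that is invisible when $c = 0$ (where the corollary collapses exactly to Theorem~\ref{thm: main}). Everything else — linearity of both the primal and dual programs, validity of weak duality, and the reduction from ``non-small gradient'' to ``some coordinate above $c$'' — carries over unchanged from the theorem.
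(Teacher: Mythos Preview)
Your plan is precisely the paper's own: the corollary is stated there with no separate proof, only the remark that it ``has the same proof as Theorem~\ref{thm: main}'', and you have spelled that argument out with the $c$-shift in place. Your flagged concern is in fact warranted rather than merely delicate: the condition $\lambda^\trp(b-c\bm{1})>0$ isolates to $b_m > c\,(\bm{1}^\trp\lambda)/\lambda_m - \lambda_m^{-1}\sum_{j<m}\lambda_j b_j$ (equivalently $c/\lambda_m$ in place of $c$ after simplex normalization), which matches the bound printed in the corollary only when $c=0$ --- so the discrepancy you anticipated is a slip in the paper's statement, not in your method.
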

\subsection{Algorithm and limitations}
We denote a Farkas layer by $\mathcal{F} := \mathcal{F}(\cdot;W,b) : \mathcal{D}_{\text{in}} \to \mathcal{D}_{\text{out}}$, where the weights and biases satisfy Theorem \ref{thm: main}. The construction of $W$ and $b$ imposes certain restrictions on the existence of Farkas layers: namely we require that $\text{dim}(\mathcal{D}_{\text{out}}) \geq 2$. Finally, some network architectures claim to perform better without bias vectors present; we require bias vectors by construction. We focus on replacing Convolutional and Linear layers in (deep) neural networks, with efficient implementations in PyTorch. The general method of incorporating Farkas layers is provided in Algorithm \ref{algo: f-1}. To minimize computational cost, we consider $\lambda$ to be the evenly spaced simplex element, i.e. $\lambda_i = \text{dim}(\mathcal{D}_{\text{out}})^{-1}$ (for $i=1,\ldots,\text{dim}(\mathcal{D}_{\text{out}})$), but for smaller problems (such as the motivating 1-layer example) $\lambda$ can be learnable. Farkas layers can be made into Farkas blocks in conjunction with residual networks as well, as outlined in Algorithm \ref{algo: f-2}, which we leave in the Appendix. In the case of multiple convolutional layers in a given residual block, the extension is natural. \\  \\
In both Algorithm \ref{algo: f-1} and \ref{algo: f-2} we consider two choices for the aggregating function (AggFunc): either the sum or the mean of the previous outputs and biases. Indeed, let $\text{dim}(\mathcal{D}_{\text{out}}) = m$, and thus $\lambda_j = m^{-1}.$ Then we have the following two aggregation methods: $$ w_m = -\sum_{j=1}^{m-1}w_j \quad  \text{(sum)}, \quad \quad w_m = \frac{-1}{m-1}\sum_{j=1}^{m-1}w_j \quad \text{(mean)}, $$ and similarly for the bias vectors. Theorem \ref{thm: main} is constructed using ``sum", but the result holds in the case of using ``mean" since ReLUs are 1-positive homogenous. Furthermore, by using ``mean", we induce stability with respect to the $\ell_\infty$-induced matrix norm for a matrix $A \in \R^{m \times n}$, defined as 
\begin{align}
    \| A \|_\infty = \max_{1\leq j \leq m} \| a_j^\trp \|_1,
\end{align}
where $a_j^\trp$ are the rows of $A$. Let $\Tilde{A} \in \R^{(m-1)\times n}$ and $\bm{a}^\trp$ be the mean of the rows of $\Tilde{A}$. Define $A := (\Tilde{A}; \bm{a}^\trp) \in \R^{m \times n}$, then
\begin{align}
    \| A \|_\infty &= \max\left\{ \|\tilde{A}\|_\infty, \| \bm{a}^\trp \|_1 \right\} \leq \max\left\{ \|\tilde{A} \|_\infty, (m-1)^{-1} \sum_{j=1}^{m-1} \| a_j^\trp \|_1 \right\} \leq \| \tilde{A} \|_\infty .
\end{align}
Thus, in the context of network weights, the burden of stability lies on the trainable weights and not the aggregated component. A similar analysis can be done for the bias vectors. Thus, for networks that train on ImageNet-1k, or to potentially use larger learning rates, we believe that using ``mean" as the aggregation function is necessary.
\begin{algorithm}[H]
\caption{FarkasLayer with ReLU activation; $\mathcal{F}(\cdot;W,b)$ with AggFunc = ``sum" or ``mean"}
\label{algo: f-1}
\begin{algorithmic}
\State Initialize $W \in \R^{(m-1)\times n}$ and $b \in \R^m$ 
\State Forward pass: $x$
\State $y = Wx$ \Comment{e.g. Linear and/or Conv with no bias}
\State Compute the Aggregated Components:
\State $\bm{y} := -\text{AggFunc}(y)$ 
\State $\bm{b} := \max\{ -\text{AggFunc}(b[0:-1]), b_m \}$ 
\State $y \leftarrow \text{concatenate}(y,\bm{y})$
\State $b \leftarrow \text{concatenate}(b[0:-1],\bm{b})$
\State $y \leftarrow y + b$
\State $y \leftarrow \text{ReLU}(y)$ 
\State Optional: apply batch normalization to $y$
\State Return $y$
\end{algorithmic}
\end{algorithm}

\section{Experiments --- Image classification}\label{sec:exp}
Image classification is a standard benchmark for measuring new advancements in deep learning. Our goal is to show how training with Farkas layers, henceforth referred to as FLs, impacts learning in these settings relative to other training techniques such as normalization and initialization (e.g. FixUp). \\ \\
We consider CIFAR10, CIFAR100, and ImageNet-1k datasets for the purposes of testing FL-based residual networks on image classification tasks. For CIFAR10, we consider 18, 34, 50, and 101 layer residual networks. The latter two of these use a BottleNeck block structure. On CIFAR100, we only consider 34, 50, and 101 layers. All models are trained using the same SGD schedule for 200 epochs. We make the distinction between a ``large" learning rate, meaning 0.1, and a ``small" learning rate, which is 0.01. We use standard data augmentation (RandomCrop and RandomHorizontalFlip) but do \textit{not} normalize the inputs with respect to the mean and standard deviation of the dataset in the case of CIFAR10/CIFAR100. ] The weights are left at the default PyTorch initialization. Additionally, we use cutout to improve generalization \citep{cutout} and use weight-decay. For ImageNet-1k, we use modified code from the DAWNBench competition \citep{DAWNBench, fastimagenet}. In this set of experiments, we augment the data via RandomCrop, RandomHorizontalFlip, and normalize the input data. We only consider the case of networks using BN and see how FLs can improve performance, and we train with 30 epochs. \\ \\ 
Henceforth, we call a residual network a FarkasNet if is it comprised of only FLs, and we always use the ``sum" aggregation function unless otherwise specified. We make the distinction of networks that use or do not use BN in the tables. When omitting BN, we use the small learning rate. For CIFAR10 and CIFAR100, the results presented are averaged across three runs; if one of the runs failed\footnote{Failed implies that the network did not make any progress in the first five epochs.} to train, we omit it from the average but place an asterisk. Full training curves are in the Appendix.
\subsubsection*{Interpreting dependence of batch normalization}
\begin{table}[H] 
  \caption{Percent misclassification comparison for CIFAR10 (lower is better)}
\label{tab: cifar10-test-compare}
  \begin{center}
\begin{adjustbox}{width=0.9\textwidth}
  \begin{tabular}{lc cccc }
  \toprule
  & & 18 Layers & 34 Layers & 50 Layers & 101 Layers \\
  \midrule
  ResNet (with BN, large LR)    & & 7.01 & 6.79 & 5.04 & \textbf{4.95} \\
  FarkasNet (with BN, large LR)   & & \textbf{6.87} & \textbf{6.50 }& \textbf{5.03} & 4.98  \\
  \midrule
  ResNet (no BN, small LR)    &  & 12.51 & 9.90 & 24.53 & 25.25 \\
  FarkasNet (no BN, small LR)   &  & \textbf{10.83} & \textbf{9.45} & \textbf{19.84} & \textbf{15.38} \\
\bottomrule
\end{tabular}
\end{adjustbox}
\end{center}
\end{table}


\begin{table}[H] 
  \caption{Percent misclassification comparison for CIFAR100 (lower is better)}
\label{tab: cifar100-test-compare}
  \begin{center}
\begin{adjustbox}{width=0.7\textwidth}
  \begin{tabular}{lc ccc }
  \toprule
  & & 34 Layers &50 Layers& 101 Layers \\
  \midrule
  ResNet (with BN)   & & 30.95 & \textbf{23.43 } & \textbf{22.90}  \\
  FarkasNet (with BN)& & \textbf{30.34} & 23.88 &  23.22  \\
  \midrule
  ResNet (no BN)    &  & 40.10 & 55.05$^*$ & 54.02     \\
  FarkasNet (no BN) &  & \textbf{34.70} & \textbf{43.93 } & \textbf{42.94}  \\
\bottomrule
\end{tabular}
\end{adjustbox}
\end{center}
\end{table}
Tables \ref{tab: cifar10-test-compare} and \ref{tab: cifar100-test-compare} both show a strong disparity in the learning capacity of a DNN when batch normalization is removed, which is unsurprising. On CIFAR10, we primarily observe similar, if not better, test errors when batch normalization is used. Without normalization, our test error is always better; the same can be said for the CIFAR100 dataset. Thus, while FarkasResNets also diminish in quality, we note that adding a guaranteed undead neuron to all layers heavily impacts learning in the case of a non-normalized network. This is not at all to say that training a deep neural network is only possible using FLs or that we achieve state-of-the-art performance; however, our implementation demonstrates the strong dependency of getting low test accuracy with normalization and is geometrically motivated.
\subsubsection*{Improvement at first epoch for ImageNet-1k}
The final table shows similar behaviour on ImageNet-1k, where FLs\footnote{For ImageNet-1k, we use ``mean" aggregation function.} neither dramatically improve nor inhibit the learning capacity towards the end of training except in the case of 50-layers, where it performs significantly better. Figure \ref{fig:imagenet-comp} shows that FarkasNets have a dramatic advantage over standard ResNets at the start of training. Evidently, this advantage gradually diminishes; nonetheless, its presence alludes to a bigger problem in network initialization. We notice a roughly 20\% improvement on the first epoch, simply due to the use of FLs.
\begin{table}[H] 
  \caption{Top1/Top5 percent misclassification comparison for ImageNet-1k (lower is better)}
\label{tab: imagenet-test-compare}
  \begin{center}
\begin{adjustbox}{width=0.7\textwidth}
  \begin{tabular}{lc ccc }
  \toprule
  & & 50 Layers & 101 Layers & 152 Layers \\
  \midrule
  ResNet (with BN)   & & 25.64/7.68 & \textbf{23.23}/\textbf{6.49} & 22.26/5.95  \\
  FarkasNet (with BN)& & \textbf{23.70}/\textbf{6.65} & 23.74/6.67 & \textbf{22.17}/\textbf{5.86 }  \\
\bottomrule
\end{tabular}
\end{adjustbox}
\end{center}
\end{table}
\begin{figure}[H]
    \centering
    \includegraphics[width=0.6\textwidth]{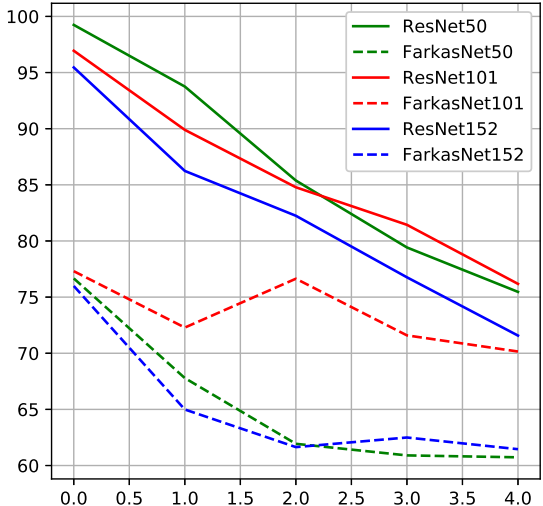}
    \caption{Illustration of improved performance at starting epochs with Farkas layers}
    \label{fig:imagenet-comp}
\end{figure}
\subsubsection*{Impact of data normalization and best practice comparison}
We briefly compare against other best practice learning regimes on CIFAR10. Table \ref{tab: best-practice-compare} highlights the various differences between methods. At the core, we compare FarkasNets to FixUp, a recently proposed initialization scheme that allows the use of larger learning rates in the absence of batch normalization, and thus faster convergence. We consider a 34-layer ResNet with BasicBlock structure using the official implementation of FixUp\footnote{From one of the authors' Github pages} for the network architecture. We compare using a 34-layer FarkasNet, where the last layer in a block is initialized to zero (as in FixUp, which is motivated by several other works \citep{zerolast_1,zerolast_2,zerolast_3}) but use standard initialization otherwise, as well as ``mean" AggregationFunction. We maintain the setup of the previous experiments.  \\ \\
Data normalization is a standard ``trick" for training deep neural networks, and acts like an initial batch normalization step, where the incoming data is scaled to have zero mean and unit variance. In the case of CIFAR10 and CIFAR100, we strove to study the problem of training without any attempts at normalization, which is why we have omitted this from our training methodology. In the absence of data normalization, we observed that FixUp requires a small learning rate for training. Thus, while \citep{fixup} claims to present a theory for addressing training in the absence of batch normalization, we find that it still heavily hinges on normalization principles, namely on the first layer pass. This observation is not to diminish the benefits of FixUp, as faster convergence is observed (see Figure \ref{fig:best-compare-curves}), but the learning rate does need adjustment. On the other hand, our 34-layer FarkasNet was able to use a medium learning rate of 0.05, allowing for improved performance relative to Table \ref{tab: cifar10-test-compare}. 
\begin{table}[H] 
  \caption{Best practice comparison on CIFAR10 (lower test error is better)}
\label{tab: best-practice-compare}
  \begin{center}
\begin{adjustbox}{width=0.7\textwidth}
  \begin{tabular}{lc cccc }
  \toprule
  & & Batch Normalization& LR capacity & Test Error \\
  \midrule
ResNet34     &  & \cmark & Large & 6.77  \\
ResNet34     & & \xmark & Small & 9.18  \\
FixUp34    & & \xmark & Small & 7.49  \\
FarkasNet34   & & \xmark & Medium & 7.12  \\
\bottomrule
\end{tabular}
\end{adjustbox}
\end{center}
\end{table}

\begin{figure}[H]
    \centering
    \includegraphics[width=0.5\textwidth]{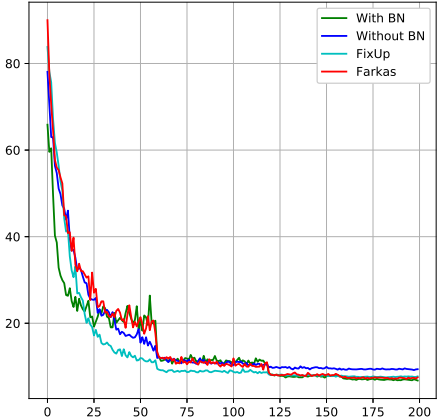}
    \caption{Training curves for best practice comparison on CIFAR10}
    \label{fig:best-compare-curves}
\end{figure}
Finally, we also address the notion that training a \textit{very} deep neural network is challenging using maximal learning rate without normalization, which has been experimentally observed in FixUp and Layer-Sequential Unit-Variance (LSUV) orthogonal initialization \citep{mishkin}. Despite the shortcomings of previous work, by incorporating the ``mean" aggegration function and default initialization, we are able to successfully train a 101-layer FarkasNet with a large learning rate on CIFAR10. We present the averaged training curve in Figure \ref{fig: largeLR}, left in the Appendix, where we repeated the experiment three times. We remark that using default initialization alone results in complete failure to train. 

\section{Conclusion}
To date, successful training of a deep neural networks requires either some form of normalization, weight initialization, and/or choice of activation function, all of which come with their own challenges. In this work, we provide a fourth option, Farkas layers, which can be used alone or in conjunction with the aforementioned techniques. The Farkas layer is based on the interaction of the geometry of the weights with the data: by simply adding one linearly dependent row to the weight matrix, we ensure that no neurons are dead. Using our method, we have shown that training with larger learning rates is possible even in the absence of batch normalization and with default initialization. In this work, we only touched on image classification, but Farkas layers could be used in many deep learning applications, which is left for future work. 

\bibliographystyle{main}
\bibliography{main}

\begin{thebibliography}{27}
\providecommand{\natexlab}[1]{#1}
\providecommand{\url}[1]{\texttt{#1}}
\expandafter\ifx\csname urlstyle\endcsname\relax
  \providecommand{\doi}[1]{doi: #1}\else
  \providecommand{\doi}{doi: \begingroup \urlstyle{rm}\Url}\fi

\bibitem[Ba et~al.(2016)Ba, Kiros, and Hinton]{layernorm}
Jimmy~Lei Ba, Jamie~Ryan Kiros, and Geoffrey~E Hinton.
\newblock Layer normalization.
\newblock \emph{arXiv preprint arXiv:1607.06450}, 2016.

\bibitem[Boyd \& Vandenberghe(2004)Boyd and Vandenberghe]{boyd}
Stephen Boyd and Lieven Vandenberghe.
\newblock \emph{Convex optimization}.
\newblock Cambridge university press, 2004.

\bibitem[Clevert et~al.(2015)Clevert, Unterthiner, and Hochreiter]{elu}
Djork-Arn{\'e} Clevert, Thomas Unterthiner, and Sepp Hochreiter.
\newblock Fast and accurate deep network learning by exponential linear units
  (elus).
\newblock \emph{arXiv preprint arXiv:1511.07289}, 2015.

\bibitem[Coleman et~al.(2018)Coleman, Kang, Narayanan, Nardi, Zhao, Zhang,
  Bailis, Olukotun, R{\'{e}}, and Zaharia]{DAWNBench}
Cody Coleman, Daniel Kang, Deepak Narayanan, Luigi Nardi, Tian Zhao, Jian
  Zhang, Peter Bailis, Kunle Olukotun, Christopher R{\'{e}}, and Matei Zaharia.
\newblock Analysis of dawnbench, a time-to-accuracy machine learning
  performance benchmark.
\newblock \emph{CoRR}, abs/1806.01427, 2018.
\newblock URL \url{http://arxiv.org/abs/1806.01427}.

\bibitem[DeVries \& Taylor(2017)DeVries and Taylor]{cutout}
Terrance DeVries and Graham~W Taylor.
\newblock Improved regularization of convolutional neural networks with cutout.
\newblock \emph{arXiv preprint arXiv:1708.04552}, 2017.

\bibitem[Galloway et~al.(2019)Galloway, Golubeva, Tanay, Moussa, and
  Taylor]{bn_adversarial}
Angus Galloway, Anna Golubeva, Thomas Tanay, Medhat Moussa, and Graham~W.
  Taylor.
\newblock Batch normalization is a cause of adversarial vulnerability.
\newblock \emph{CoRR}, abs/1905.02161, 2019.
\newblock URL \url{http://arxiv.org/abs/1905.02161}.

\bibitem[Glorot \& Bengio(2010)Glorot and Bengio]{xavier_init}
Xavier Glorot and Yoshua Bengio.
\newblock Understanding the difficulty of training deep feedforward neural
  networks.
\newblock In \emph{Proceedings of the thirteenth international conference on
  artificial intelligence and statistics}, pp.\  249--256, 2010.

\bibitem[Goyal et~al.(2017)Goyal, Doll{\'a}r, Girshick, Noordhuis, Wesolowski,
  Kyrola, Tulloch, Jia, and He]{zerolast_3}
Priya Goyal, Piotr Doll{\'a}r, Ross Girshick, Pieter Noordhuis, Lukasz
  Wesolowski, Aapo Kyrola, Andrew Tulloch, Yangqing Jia, and Kaiming He.
\newblock Accurate, large minibatch sgd: Training imagenet in 1 hour.
\newblock \emph{arXiv preprint arXiv:1706.02677}, 2017.

\bibitem[Hanin \& Rolnick(2018)Hanin and Rolnick]{hanin2018start}
Boris Hanin and David Rolnick.
\newblock How to start training: The effect of initialization and architecture.
\newblock In \emph{Advances in Neural Information Processing Systems}, pp.\
  571--581, 2018.

\bibitem[Hardt \& Ma(2016)Hardt and Ma]{zerolast_2}
Moritz Hardt and Tengyu Ma.
\newblock Identity matters in deep learning.
\newblock \emph{arXiv preprint arXiv:1611.04231}, 2016.

\bibitem[He et~al.(2015)He, Zhang, Ren, and Sun]{kaiming_init}
Kaiming He, Xiangyu Zhang, Shaoqing Ren, and Jian Sun.
\newblock Delving deep into rectifiers: Surpassing human-level performance on
  imagenet classification.
\newblock \emph{arXiv preprint arXiv:1502.01852}, 2015.
\newblock URL \url{http://arxiv.org/abs/1502.01852}.

\bibitem[He et~al.(2016)He, Zhang, Ren, and Sun]{resnet50}
Kaiming He, Xiangyu Zhang, Shaoqing Ren, and Jian Sun.
\newblock Identity mappings in deep residual networks.
\newblock In Bastian Leibe, Jiri Matas, Nicu Sebe, and Max Welling (eds.),
  \emph{Computer Vision -- ECCV 2016}, pp.\  630--645, Cham, 2016. Springer
  International Publishing.

\bibitem[Ioffe \& Szegedy(2015)Ioffe and Szegedy]{batchnorm}
Sergey Ioffe and Christian Szegedy.
\newblock Batch normalization: Accelerating deep network training by reducing
  internal covariate shift.
\newblock \emph{arXiv preprint arXiv:1502.03167}, 2015.

\bibitem[Klambauer et~al.(2017)Klambauer, Unterthiner, Mayr, and
  Hochreiter]{selu}
G{\"u}nter Klambauer, Thomas Unterthiner, Andreas Mayr, and Sepp Hochreiter.
\newblock Self-normalizing neural networks.
\newblock In \emph{Advances in neural information processing systems}, pp.\
  971--980, 2017.

\bibitem[Lu et~al.(2019)Lu, Shin, Su, and Karniadakis]{lu2019dying}
Lu~Lu, Yeonjong Shin, Yanhui Su, and George~Em Karniadakis.
\newblock Dying relu and initialization: Theory and numerical examples.
\newblock \emph{arXiv preprint arXiv:1903.06733}, 2019.

\bibitem[Maas et~al.(2013)Maas, Hannun, and Ng]{leaky}
Andrew~L Maas, Awni~Y Hannun, and Andrew~Y Ng.
\newblock Rectifier nonlinearities improve neural network acoustic models.
\newblock In \emph{Proceedings of the 30th International Conference on Machine
  Learning-Volume 28}, 2013.

\bibitem[Mishkin \& Matas(2015)Mishkin and Matas]{mishkin}
Dmytro Mishkin and Jiri Matas.
\newblock All you need is a good init.
\newblock \emph{arXiv preprint arXiv:1511.06422}, 2015.

\bibitem[Nair \& Hinton(2010)Nair and Hinton]{relu}
Vinod Nair and Geoffrey~E Hinton.
\newblock Rectified linear units improve restricted boltzmann machines.
\newblock In \emph{Proceedings of the 27th international conference on machine
  learning (ICML-10)}, pp.\  807--814, 2010.

\bibitem[Santurkar et~al.(2018)Santurkar, Tsipras, Ilyas, and Madry]{madry_bn}
Shibani Santurkar, Dimitris Tsipras, Andrew Ilyas, and Aleksander Madry.
\newblock How does batch normalization help optimization?
\newblock In \emph{Advances in Neural Information Processing Systems}, pp.\
  2483--2493, 2018.

\bibitem[Shang et~al.(2017)Shang, Chiu, and Sohn]{shang}
Wenling Shang, Justin Chiu, and Kihyuk Sohn.
\newblock Exploring normalization in deep residual networks with concatenated
  rectified linear units.
\newblock In \emph{Thirty-First AAAI Conference on Artificial Intelligence},
  2017.

\bibitem[Shaw et~al.(2018)Shaw, Bulatov, and Howard]{fastimagenet}
Andrew Shaw, Yaroslav Bulatov, and Jeremy Howard.
\newblock Imagenet in 18 minutes.
\newblock 2018.
\newblock URL \url{https://github.com/diux-dev/imagenet18}.

\bibitem[Srivastava et~al.(2015)Srivastava, Greff, and Schmidhuber]{zerolast_1}
Rupesh~Kumar Srivastava, Klaus Greff, and J{\"u}rgen Schmidhuber.
\newblock Highway networks.
\newblock \emph{arXiv preprint arXiv:1505.00387}, 2015.

\bibitem[Ulyanov et~al.(2016)Ulyanov, Vedaldi, and Lempitsky]{instancenorm}
Dmitry Ulyanov, Andrea Vedaldi, and Victor Lempitsky.
\newblock Instance normalization: The missing ingredient for fast stylization.
\newblock \emph{arXiv preprint arXiv:1607.08022}, 2016.

\bibitem[Vaswani et~al.(2017)Vaswani, Shazeer, Parmar, Uszkoreit, Jones, Gomez,
  Kaiser, and Polosukhin]{vaswani}
Ashish Vaswani, Noam Shazeer, Niki Parmar, Jakob Uszkoreit, Llion Jones,
  Aidan~N Gomez, {\L}ukasz Kaiser, and Illia Polosukhin.
\newblock Attention is all you need.
\newblock In \emph{Advances in neural information processing systems}, pp.\
  5998--6008, 2017.

\bibitem[Wu \& He(2018)Wu and He]{groupnorm}
Yuxin Wu and Kaiming He.
\newblock Group normalization.
\newblock In \emph{Proceedings of the European Conference on Computer Vision
  (ECCV)}, pp.\  3--19, 2018.

\bibitem[Zhang et~al.(2019)Zhang, Dauphin, and Ma]{fixup}
Hongyi Zhang, Yann~N Dauphin, and Tengyu Ma.
\newblock Fixup initialization: Residual learning without normalization.
\newblock \emph{arXiv preprint arXiv:1901.09321}, 2019.

\bibitem[Zhu et~al.(2017)Zhu, Park, Isola, and Efros]{zhu}
Jun-Yan Zhu, Taesung Park, Phillip Isola, and Alexei~A Efros.
\newblock Unpaired image-to-image translation using cycle-consistent
  adversarial networks.
\newblock In \emph{Proceedings of the IEEE international conference on computer
  vision}, pp.\  2223--2232, 2017.

\end{thebibliography}

\appendix
\section{Appendix}
\subsection{Elementary linear programming}
\begin{proposition}[Farkas' Lemma] Let $W \in \R^{m \times n}$ and $b \in \R^m$. $$ \{x \ | \ Wx + b = 0, x \geq 0 \} \neq \emptyset \iff \nexists \  \lambda \text{ s.t. } \lambda^\trp W \geq 0, \ \lambda^\trp b > 0.$$
\end{proposition}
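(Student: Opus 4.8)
The plan is to read the left-hand set as a membership question for the finitely generated cone spanned by the columns of $W$, and then to separate. Write $c_1,\dots,c_n\in\R^m$ for the columns of $W$ and set $C=\{Wx : x\geq 0\}=\{\sum_{j=1}^n x_j c_j : x_j\geq 0\}$, a convex cone containing $0$. Then $\{x : Wx+b=0,\ x\geq 0\}\neq\emptyset$ is exactly the assertion $-b\in C$, and the componentwise condition $\lambda^\trp W\geq 0$ is exactly $\lambda^\trp c_j\geq 0$ for every $j$ (since $(\lambda^\trp W)_j=\lambda^\trp(We_j)=\lambda^\trp c_j$). So it suffices to prove: $-b\in C$ if and only if no $\lambda\in\R^m$ satisfies both $\lambda^\trp c_j\geq 0$ $(j=1,\dots,n)$ and $\lambda^\trp b>0$.

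The forward implication is weak duality. If $-b=Wx$ with $x\geq 0$ and $\lambda^\trp W\geq 0$, then $-\lambda^\trp b=\lambda^\trp(Wx)=(\lambda^\trp W)x\geq 0$, being a nonnegative combination of nonnegative numbers; hence $\lambda^\trp b\leq 0$, so no $\lambda$ with $\lambda^\trp b>0$ can also have $\lambda^\trp W\geq 0$. For the reverse implication I argue the contrapositive: assume $-b\notin C$ and produce a separating $\lambda$. Granting that $C$ is \emph{closed}, the separating hyperplane theorem for a point and a closed convex set gives $\lambda\in\R^m$ and $\alpha\in\R$ with $\lambda^\trp(-b)<\alpha\leq\lambda^\trp c$ for all $c\in C$. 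Testing $c=0\in C$ gives $\alpha\leq 0$, hence $\lambda^\trp b>0$; and if some $c_0\in C$ had $\lambda^\trp c_0<0$, then $tc_0\in C$ for all $t\geq 0$ while $\lambda^\trp(tc_0)\to-\infty$, contradicting $\lambda^\trp(tc_0)\geq\alpha$, so $\lambda^\trp c\geq 0$ for every $c\in C$; in particular $\lambda^\trp c_j\geq 0$ for each column, i.e. $\lambda^\trp W\geq 0$. Thus $-b\notin C$ forces such a $\lambda$ to exist, and together with the forward implication this gives the stated biconditional.

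The only genuinely nontrivial ingredient — and the real content of Farkas' lemma — is the closedness of the finitely generated cone $C$; without it the separation step is simply false. I would prove closedness by an elementary pruning (Carathéodory-type) argument: every element of $C$ is already a nonnegative combination of a linearly independent subfamily of $\{c_1,\dots,c_n\}$, so $C=\bigcup_S C_S$ with $C_S=\{\sum_{j\in S}x_j c_j : x_j\geq 0\}$ over the finitely many linearly independent index sets $S$, and each $C_S$ is the image of the closed orthant $\R_{\geq 0}^{|S|}$ under an injective linear map, hence closed; a finite union of closed sets is closed. Alternatively — and more in keeping with ``an exercise in linear programming'' — one can avoid topology entirely by running Fourier–Motzkin elimination on the system $Wx\leq -b$, $-Wx\leq b$, $-x\leq 0$: eliminating $x_1,\dots,x_n$ one variable at a time with only nonnegative multipliers terminates either in a consistent reduced system (back-substitution then yields $x\geq 0$ with $Wx+b=0$) or in an inequality $0\leq-\delta$ with $\delta>0$, and collecting the accumulated multipliers $u,v,w\geq 0$ on the three blocks and setting $\lambda=u-v$ produces exactly $\lambda^\trp W=w^\trp\geq 0$ and $\lambda^\trp b>0$. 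Either route closes the argument; everything else is bookkeeping.
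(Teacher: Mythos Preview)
Your proof is correct. The forward direction is clean weak duality, and the reverse direction via strict separation of $-b$ from the closed cone $C$ is carried out properly; your handling of the closedness of a finitely generated cone (the Carath\'eodory pruning into finitely many simplicial subcones) is the standard and rigorous way to close that gap.

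There is, however, nothing to compare against: the paper does not prove this proposition. Farkas' lemma is stated in the appendix as a black-box result and then used to derive the corollary (which is the ``Representation of Farkas' lemma'' quoted in the main text as Lemma~3.1). The phrase ``the proof is an exercise in linear programming and is left in the appendix'' refers to that corollary, not to Farkas' lemma itself. So your submission supplies a proof the paper simply omits; in that sense your separation/closedness route is one of the two canonical arguments (the other being LP duality or, as you note, Fourier--Motzkin), and either would be an acceptable fill-in here.
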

\begin{corollary}
Let $W \in \R^{m \times n}$ and $b \in \R^m$. Then $$ \{x \ | \ Wx + b \leq 0 \} = \emptyset \iff \exists \ \lambda \text{ s.t } \lambda^\trp W = 0, \ \lambda^\trp b > 0 \quad (\lambda \geq 0).   $$
\end{corollary}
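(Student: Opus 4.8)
The plan is to derive this Corollary from the Proposition (the equality/nonnegativity form of Farkas' lemma stated just above it) by the standard device of turning the inequality system $Wx+b\le 0$ into an equality system in nonnegative variables.

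First I would introduce a slack vector $s\in\R^m$ and split the free variable as $x=x^{+}-x^{-}$ with $x^{+},x^{-}\ge 0$, and verify the elementary equivalence that $\{x : Wx+b\le 0\}\neq\emptyset$ holds if and only if the augmented system
\begin{align}
Wx^{+}-Wx^{-}+Is+b=0,\qquad (x^{+},x^{-},s)\ge 0
\end{align}
is feasible: one direction takes $s=-(Wx+b)\ge 0$, and the other takes $x=x^{+}-x^{-}$, for which $Wx+b=-s\le 0$.

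Next I would collect this as $\widetilde W z+b=0$, $z\ge 0$, with $\widetilde W:=[\,W\mid -W\mid I\,]$ and $z:=(x^{+};x^{-};s)$, and apply the Proposition to this $\widetilde W$ and the same $b$. That yields: the augmented system is infeasible if and only if there exists $\lambda\in\R^m$ with $\lambda^{\trp}\widetilde W\ge 0$ and $\lambda^{\trp}b>0$. Reading $\lambda^{\trp}\widetilde W\ge 0$ blockwise gives $\lambda^{\trp}W\ge 0$, $-\lambda^{\trp}W\ge 0$, and $\lambda\ge 0$, i.e. precisely $\lambda^{\trp}W=0$ together with $\lambda\ge 0$. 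Chaining the two equivalences produces exactly the stated claim.

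There is no real obstacle here: the analytic content (LP duality, or equivalently a separating-hyperplane argument) is already absorbed into the Proposition, so the work is only the reformulation and the blockwise unpacking of $\lambda^{\trp}\widetilde W\ge 0$. If a self-contained argument were preferred, one could instead mimic the Lagrangian-duality computation used in the proof of Theorem \ref{thm: main}, applied to the feasibility program $\min\{\,0 : Wx+b\le 0\,\}$ together with strong duality for linear programs; but the reduction to the Proposition is the shorter route.
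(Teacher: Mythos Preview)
Your proposal is correct and follows essentially the same route as the paper: split $x=x^{+}-x^{-}$, introduce a nonnegative slack to turn the inequality into an equality, assemble $\widetilde W=[\,W\mid -W\mid I\,]$, and invoke the Proposition. If anything, your write-up is more complete, since you explicitly unpack the blockwise condition $\lambda^{\trp}\widetilde W\ge 0$ into $\lambda^{\trp}W=0$ and $\lambda\ge 0$, which the paper leaves implicit.
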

\begin{proof}
 Suppose by contradiction there exists an $x$ such that $Wx + b \leq 0$. Let $x^+ := \max\{x,0\}$ and $x^- := \max\{-x,0\}$, and so $x = x^+ - x^-$. Then:
  \begin{align}
      &\exists x \text{ s.t } Wx + b \leq 0 \\
      &\iff \ \exists x^+, x^- \text{ s.t } W(x^+ - x^-) + b \leq 0 \\
      &\iff \ \exists x^+, x^-, \lambda \text{ s.t } W(x^+ - x^-) + \lambda + b = 0 \quad (\lambda \geq 0) \\
      &\iff \bar{x} := (x^+, x^-, \lambda)^\trp \geq 0, \ \bar{W} := [W \ | \ -W \ | \ I] \text{ s.t } \ \bar{W}\bar{x} + b = 0,
  \end{align}
  which follows from Farkas' Lemma.
\end{proof}
\subsection{Algorithm for residual Farkas layer}
\begin{algorithm}[H]
\caption{Residual Farkas layer with ReLU activation}
\label{algo: f-2}
\begin{algorithmic}
\State Initialize $W^{(1)}, W^{(2)}, b^{(1)},b^{(2)}$ 
\State Forward pass: $x$
\State $y = \mathcal{F}(x; W^{(1)},b^{(1)})$
\State $y \leftarrow W^{(2)}y$ \Comment{e.g. Linear and/or Conv with no bias}
\State Compute aggregated components:
\State $\bm{y} := -\text{AggFunc}(x+y)$ \Comment{residual component} 
\State $\bm{b} := \max\{ -\text{AggFunc}(b^{(2)}[0:-1]), b^{(2)}_m \}$ 
\State $y \leftarrow \text{concatenate}(y,\bm{y})$
\State $b \leftarrow \text{concatenate}(b^{(2)}[0:-1],\bm{b})$
\State $y \leftarrow y + b$
\State $y \leftarrow \text{ReLU}(y)$ 
\State Optional: apply batch normalization to $y$
\State Return $y$
\end{algorithmic}
\end{algorithm}

\subsection{Training curves for CIFAR10} 
\begin{figure}[H]
    \centering
    \begin{subfigure}[b]{0.35\textwidth}
        \includegraphics[width=\textwidth]{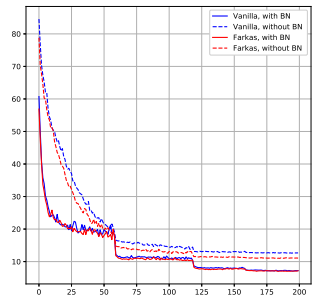}
        \caption{18 layers}
    \end{subfigure}
    \begin{subfigure}[b]{0.35\textwidth}
        \includegraphics[width=\textwidth]{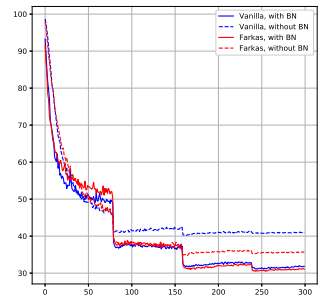}
        \caption{34 layers}
    \end{subfigure}
    \begin{subfigure}[b]{0.35\textwidth}
        \includegraphics[width=\textwidth]{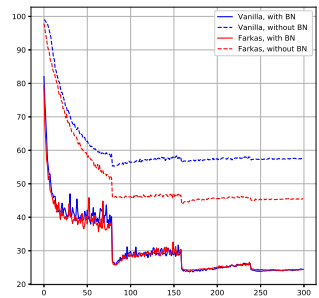}
        \caption{50 layers}
    \end{subfigure}
    \begin{subfigure}[b]{0.35\textwidth}
        \includegraphics[width=\textwidth]{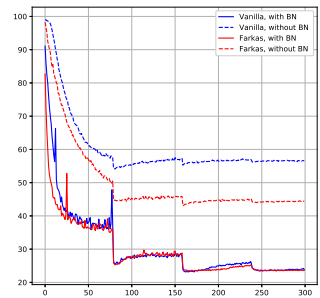}
        \caption{101 layers}
    \end{subfigure}
    \caption{Average training curves for CIFAR10; ``Vanilla" refers to standard ResNet. Dotted lines omit batch normalization, solid lines incorporate it. }
\end{figure}
\begin{figure}[H]
	\centering
	\includegraphics[width=0.45\textwidth]{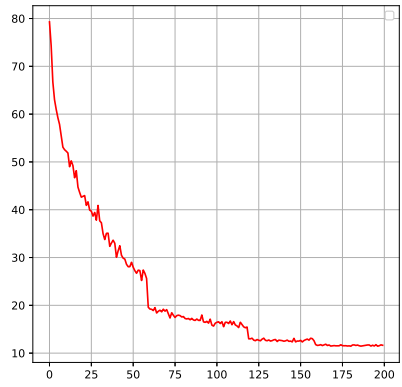}
	\caption{Average training curve for 101-layer FarkasNet, with ``mean" aggregation function, maximal learning rate, and without batch normalization.}
	\label{fig: largeLR}
\end{figure}
\pagebreak
\subsection{Training curves for CIFAR100}
For 50-layers, one of the networks (standard ResNet without BN) failed to train, and we omit it from the average.
\begin{figure}[H]
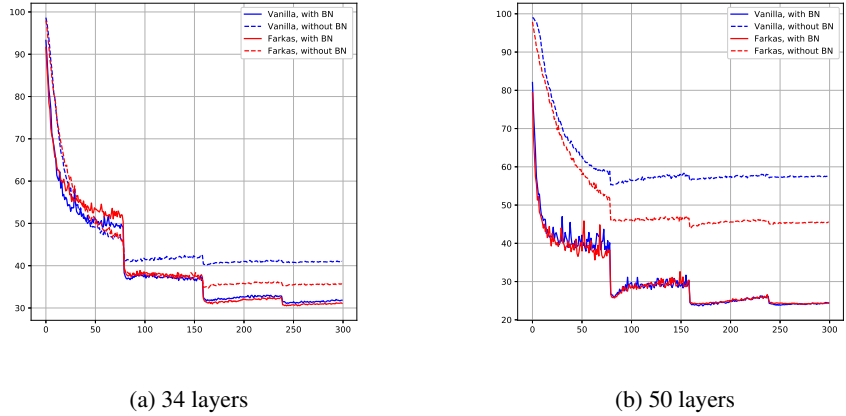
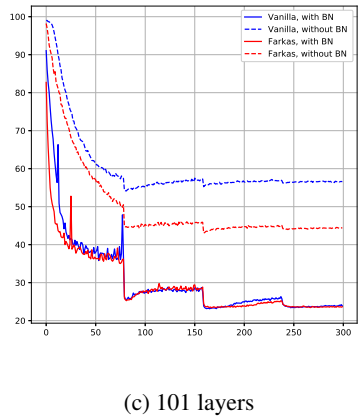

    \centering 
    \begin{subfigure}[b]{0.4\textwidth}
        \includegraphics[width=\textwidth]{34_compare.eps}
        \caption{34 layers}
    \end{subfigure}
    \hspace{2em} 
    \begin{subfigure}[b]{0.4\textwidth}
        \includegraphics[width=\textwidth]{50_compare.eps}
        \caption{50 layers}
    \end{subfigure}
    \begin{subfigure}[b]{0.4\textwidth}
        \includegraphics[width=\textwidth]{101_compare.eps}
        \caption{101 layers}
    \end{subfigure}
    \caption{Average training curves for CIFAR100; ``Vanilla" refers to standard ResNet}
\end{figure}

\subsection{Training curves for ImageNet-1k}
Here, dotted line means Farkas layer-based network.
\begin{figure}[H]
    \centering 
    \begin{subfigure}[b]{0.3\textwidth}
        \includegraphics[width=\textwidth]{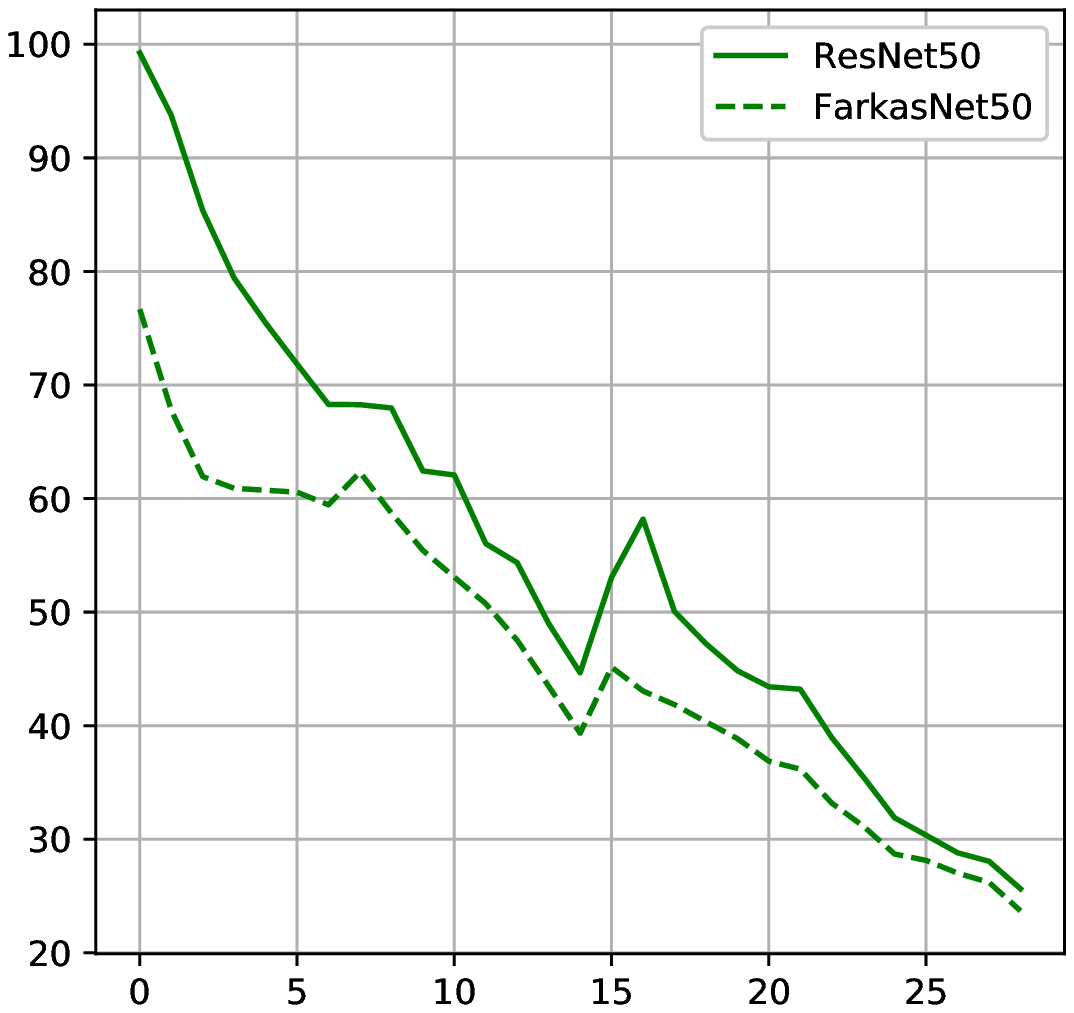}
        \caption{50 layers}
    \end{subfigure}
    \hspace{2em} 
    \begin{subfigure}[b]{0.3\textwidth}
        \includegraphics[width=\textwidth]{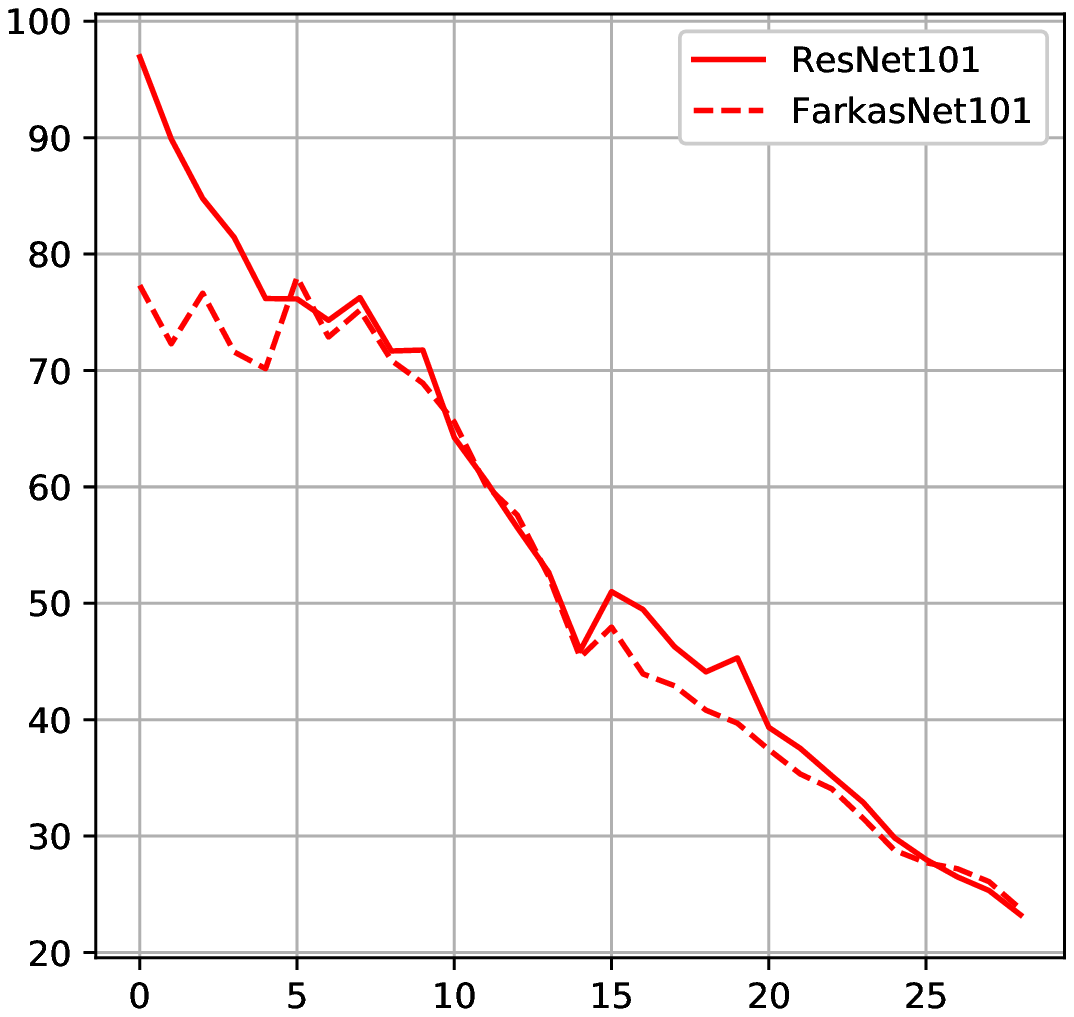}
        \caption{101 layers}
    \end{subfigure}
    \begin{subfigure}[b]{0.3\textwidth}
        \includegraphics[width=\textwidth]{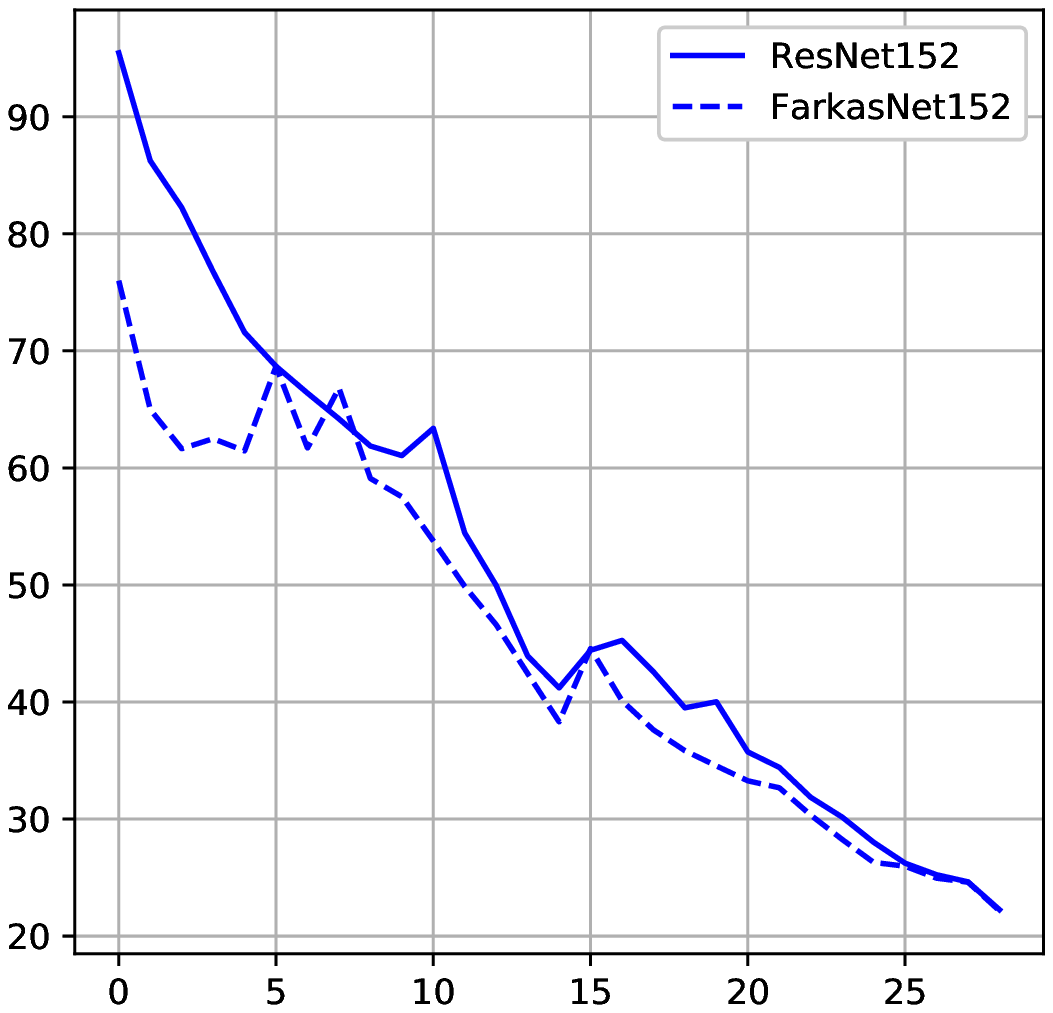}
        \caption{152 layers}
    \end{subfigure}
    \caption{Training curves for ImageNet}
\end{figure}

\end{document}